\pdfoutput=1
\documentclass[letterpaper,10pt,conference]{ieeeconf}
\IEEEoverridecommandlockouts
\usepackage{amsmath,amssymb}
\usepackage{booktabs}
\usepackage{graphicx}
\usepackage{bm}
\usepackage[breaklinks,colorlinks,linkcolor=black,citecolor=black,urlcolor=black]{hyperref}
\hypersetup{pdftitle={Algebraic Consistency Alone Does Not Certify Temporal Structure in Latent Action Models},
            pdfauthor={Di Wen, Ruodi Zhang, Kailun Yang, Kunyu Peng},
            pdfkeywords={latent action models, learning from video, evaluation, representation learning, robot manipulation}}
\newtheorem{proposition}{Proposition}

\title{Algebraic Consistency Alone Does Not Certify Temporal Structure in Latent Action Models}
\author{Di Wen$^{1}$, Ruodi Zhang$^{1}$, Kailun Yang$^{2}$, and Kunyu Peng$^{1,\dagger}$%
\thanks{$^{1}$Di Wen, Ruodi Zhang and Kunyu Peng are with the Karlsruhe Institute of Technology, Karlsruhe 76131, Germany.}%
\thanks{$^{2}$Kailun Yang is with Hunan University, Changsha 410012, China.}%
\thanks{First author (email: di.wen@kit.edu).}%
\thanks{$^{\dagger}$Corresponding author (email: kunyu.peng@kit.edu).}%
\thanks{This work has been submitted to the IEEE for possible publication. Copyright may be transferred without notice, after which this version may no longer be accessible.}%
}

\begin{document}
\bstctlcite{IEEEexample:BSTcontrol}
\maketitle
\begin{abstract}
Latent action models infer a code for the transition between two frames of action-free video. Recent methods regularise this code to compose additively and reverse antisymmetrically, and report order-of-magnitude reductions in the resulting errors as a label-free certificate that the code has captured temporal structure. We show that this conclusion does not follow. Reconstruction drives the decoded transition toward a difference of state features, for which both identities hold for any pairing, a solution the metric cannot distinguish from one encoding nuisance state or a coordinate convention. Across five source domains, a trained but unconstrained counterpart already achieves $83$--$97\%$ of the reduction relative to an untrained anchor. The residual fold is governed as much by the decoder family as by what is learned. A constrained model retrained after its temporal pairing is destroyed still reaches, in each domain, a lower error than the unconstrained model on real data. Downstream, preserving the temporal pairing yields no consistent advantage on LIBERO-GOAL or LIBERO-SPATIAL, and across the tested arms the code's mean linear action decodability falls as the algebraic error improves. We also test the most direct repair, a violation-contrastive objective that requires the algebra to fail on destroyed pairings: in the tested configurations it yields only a marginal separation within the reconstruction budget, on training and test triples alike. We recommend a validation protocol that these methods currently lack: a baseline-corrected metric, retraining on destroyed pairings, and a seed-budget analysis.
\end{abstract}

\section{Introduction}

Robot manipulation policies are limited by the cost of action-labelled trajectories, whereas videos of people manipulating objects are plentiful. Latent action models exploit the asymmetry~\cite{genie2024,lapa2025}: a relational encoder turns a pair of observations into a code for the transition between them, and the codes pretrain a vision-language-action policy~\cite{rt22023,openvla2024} grounded later on a few labelled demonstrations, with substantial reported gains~\cite{lapa2025,survey2026}.

A recent refinement adds algebraic structure~\cite{alam2026,aclam2026,rotvla2026,dlam2026}. If the latent transitions behave like displacements, then $a$ to $c$ should equal $a$ to $b$ composed with $b$ to $c$, and $b$ back to $a$ should undo $a$ to $b$. Two losses enforce these relations during pretraining, and two errors, an additivity error and a reversibility error, report how well they hold. Their reduction against an unstructured baseline, one to two orders of magnitude~\cite{alam2026}, is offered as evidence that the latent space represents transitions compositionally rather than arbitrarily. The Algebraically Consistent Latent Action Model (ALAM)~\cite{alam2026} reads this reduction, measured by what it calls algebraic probes, as showing that its latent space respects the identities and that the learned transitions generalise to held-out temporal spans. Three readings of such a reduction must be kept apart: that the codes satisfy the imposed constraint better, which the error measures directly; that they have captured which frame follows which; and that they carry the action content a downstream policy needs. We test the second and third, the readings that using the error as a label-free proxy for temporal structure presupposes. Neither follows from a low error, as the analysis below shows; composition itself is not in question, since a difference of state features composes by construction on any pairing.

\begin{figure}[t]
\centering
\includegraphics[width=0.9\linewidth]{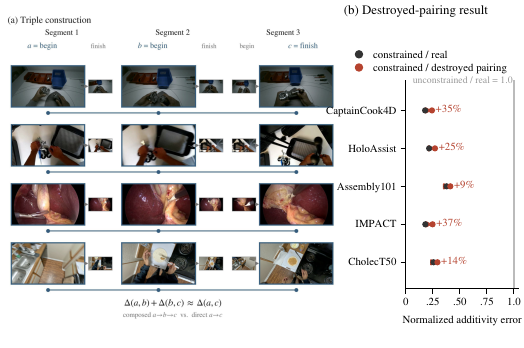}
\vskip-1em
\caption{\textbf{Low additivity error does not certify temporal pairing.} (a)~An additivity triple $a,b,c$ of real segment endpoints, on which a difference-of-features code satisfies $\Delta(a,b)+\Delta(b,c)\approx\Delta(a,c)$. (b)~Additivity error of the constrained model on real pairings (black) and retrained on destroyed pairings (red), per domain relative to the unconstrained real-pair model ($1.0$); red labels give the relative increase. The error rises but stays below the real-pair baseline.}
\label{fig:headline}
\vskip-1em
\end{figure}

These label-free errors are computable from video alone and inexpensive to report. A growing body of work reports them as evidence that a code has captured temporal structure~\cite{alam2026,aclam2026,rotvla2026,dlam2026}, up to a deployed mobility-and-manipulation system~\cite{abot2026}. Throughout, a \emph{certificate} is a low algebraic error taken, on its own, as evidence that temporal structure was captured. The certificate relies on the implication that a low error means temporal structure is present. Our question concerns evaluation practice across these methods, not any single system, and follows representation learning's reality checks~\cite{locatello2019,musgrave2020,henderson2018}.

The evidence does not support this implication. For the additive core, we show analytically that a model which reconstructs well is pushed towards representing the decoded transition as a difference of state features, for which both relations hold as identities regardless of the order in which the frames occurred. The errors can be small because the model reconstructs, not because it discovered anything about time, and the source-conditioned and quantised forms share the diagnosis empirically. Such a representation need not even be poor for control, since a difference of \emph{controllable} state is an action~\cite{whatmatters2026}, which is precisely the problem: the metric cannot tell it from a difference of nuisance state or an encoding convention.

Two measurements follow. Against an untrained anchor, an unconstrained model identical but for the algebraic losses already accounts for most of the reduction relative to that anchor, and the fold that survives against this trained counterpart depends as much on the decoder family as on what is learned. And because the identities hold for any pairing, we destroy the pairing of the pretraining frames within each dataset by successor swapping, replacing each successor while holding the source fixed so the annotated succession is removed (a \emph{destroyed pairing}), and retrain every arm: the constrained error rises, but remains below the same domain's unconstrained model on real pairings (Fig.~\ref{fig:headline}). A certificate whose passing level is attainable after the pairing is destroyed cannot be read as evidence that succession was captured. These errors are defined on the successor relation ALAM regularises~\cite{alam2026}, so the control targets succession; a variant composing over same-scene, non-successive triples~\cite{aclam2026} falls outside it.

To evaluate the policy implications, we re-instantiate the same objectives in a controlled, self-contained LIBERO~\cite{libero2023} pipeline whose pretrained encoder initialises the unmodified official policy. This is an independent policy-side test, not a direct transfer of any released system. Success does not follow the metric, and the constraint's benefit survives destroying the successor pairing it is credited with capturing.

This paper makes three contributions. First, a diagnosis: we prove and measure that a reconstructing additive model satisfies the algebraic identities as an artefact, and that the error's variation across triples follows the geometry of the feature chords rather than their pairing, so a low error cannot identify whether the code captured temporal succession. Second, a control: retraining every arm after destroying the temporal pairing, which none of the works reporting these metrics runs, shows the certified level is reached without it. Third, a protocol: a baseline-corrected metric, the destroyed-pairing control, a two-level action probe, and a seed-budget analysis, inexpensive enough for any method to report.

\section{Related Work}

\subsection{Latent Actions from Action-Free Video}

Genie~\cite{genie2024} showed that a code inferred between consecutive frames can stand in for an action well enough to drive an interactive world model; LAPO~\cite{lapo2024} made such codes a pretraining substrate for control; and LAPA~\cite{lapa2025} turned them into a pretraining target for vision-language-action policies~\cite{rt22023,openvla2024}, grounded afterwards on a few labelled demonstrations. This approach needs no action labels, complements action-supervised policy learning~\cite{diffusionpolicy2023}, and has become the standard route from human video to manipulation policy~\cite{survey2026}.

A recent line assumes these codes behave like displacements. ALAM~\cite{alam2026} regularises triplets so consecutive transitions sum to the one they span and a reversal cancels its forward counterpart; its quantised codes are decoded by a pixel decoder over source patches, and its $25$--$85\times$ reductions are ratios against a trained unconstrained model of that architecture, over rollout horizons. The Additively Compositional Latent Action Model (AC-LAM)~\cite{aclam2026} enforces additive composition over same-scene, not necessarily consecutive triples; RotVLA~\cite{rotvla2026} places latent actions in $SO(n)$; DLAM~\cite{dlam2026} makes them distributional; ABot-M0.5~\cite{abot2026} carries the losses into a deployed mobility-and-manipulation system. These five representative methods report an algebraic consistency error; none destroys its pretraining data's temporal pairing as a control, and none trains against violating negatives.

These methods address a different class of degenerate optima from the one we identify. AC-LAM observed that its inverse-dynamics form collapsed all latents to zero and adopted a forward form; RotVLA penalises latents that ``degenerate to trivial solutions''; AC-MTM~\cite{acmtm2026} adds an anti-collapse signal from action labels against the ``trivial solution of a constant encoder''. All are magnitude or constancy guards, caught by a variance floor. But the difference-of-features solution of Sec.~\ref{sec:analysis} keeps its scale, reconstructs well, and satisfies both identities for arbitrary pairings, so it passes them all.

\subsection{Diagnosing What the Codes Measure}

A complementary line asks what the codes contain and how to repair them: anchoring with a linear action head under distractors~\cite{laom2025}, tracing failures to exogenous state~\cite{whyfail2026}, filtering view-specific content by cross-view decoding~\cite{mvplam2026}, and a systematic design study that finds difference-like features competitive~\cite{whatmatters2026}. These repair or audit the codes' \emph{content}; our question is prior, whether the certifying error reads the data at all, and their anchoring is evaluated downstream in Sec.~\ref{sec:policy}. Zhang et al.~\cite{lamanalysis2025} relate the objective to principal component analysis in a linear setting; CD-LAM~\cite{cdlam2026} shows reconstruction-only training absorbs action-irrelevant factors; ATM~\cite{atm2026} probes latent transitions post hoc; Garrido et al.~\cite{wild2026} test whether an action inferred on one video transfers to another. These works diagnose the \emph{representation}. None asks whether the algebraic error is informative about the data it is computed on, and none destroys the temporal pairing of its own pretraining data as a control. Locatello et al.~\cite{locatello2019} found unsupervised disentanglement metrics neither identify their target without inductive bias nor predict downstream usefulness, the two failures we establish for the algebraic error; seed sensitivity in deep RL~\cite{henderson2018}, baseline-corrected metric learning~\cite{musgrave2020} and design-space studies of imitation learning~\cite{robomimic2021} are precedents of the same kind.

Contrastive objectives have also reached latent actions: ConLA~\cite{conla2026} contrasts a forward frame pair against its temporal reverse, CLAP~\cite{clap2026} contrasts vision-derived latents against ground-truth action tokens; both contrast \emph{features}. In contrast, Sec.~\ref{sec:method} contrasts the degree to which the algebra itself holds, with destroyed pairings as negatives. Temporal order verification is long established as a pretext task~\cite{shuffleandlearn2016}. We test the natural version of this idea, requiring the algebra to fail on disordered data, and find that in the tested configurations it yields only a marginal separation within the reconstruction budget: the algebra holds on disordered data as on ordered data. The survey~\cite{survey2026} cautions that structural biases ``do not guarantee that latent composition matches the physical composition'' of robot actions; our measurements provide the mechanism.

\section{Analysis of the Algebraic Errors}
\label{sec:analysis}

The two algebraic errors are normalised residuals of the additive and reversal relations,
\begin{equation}
  \mathcal{E}_{\mathrm{add}} = \frac{\lVert z_{ac} - (z_{ab} + z_{bc})\rVert}{\lVert z_{ac}\rVert}, \qquad
  \mathcal{E}_{\mathrm{rev}} = \frac{\lVert z_{ba} + z_{ab}\rVert}{\lVert z_{ab}\rVert},
  \label{eq:errors}
\end{equation}
with $z_{xy}$ the code the relational encoder assigns to the ordered pair $(x,y)$; a low value is read as evidence the code composes like a displacement. Eq.~\eqref{eq:errors} normalises each triple by its own transition norm and we summarise it by the median over triples; ALAM reports the unnormalised expectation $\mathbb{E}\lVert z_{ac}-(z_{ab}+z_{bc})\rVert$ and AC-LAM the ratio of expectations $\mathbb{E}\lVert z_{ac}-(z_{ab}+z_{bc})\rVert^2/\mathbb{E}\lVert z_{ab}\rVert^2$~\cite{alam2026,aclam2026}, and Sec.~\ref{sec:corrected} reports the conclusions under all three.

\subsection{Reconstruction Implies the Identities}

Write $\phi$ for the map from an observation to a state embedding and $z_{ab}$ for the code the relational encoder assigns to the ordered pair $(a,b)$. We analyse the additive core,
\begin{equation}
  \hat{\phi}(b) = \phi(a) + \mathrm{dec}(z_{ab}),
  \label{eq:additive}
\end{equation}
and train $\hat{\phi}(b)$ to reconstruct the target state embedding $\phi(b)$. Minimising this feature-space reconstruction residual therefore drives $\mathrm{dec}(z_{ab})$ towards $\phi(b)-\phi(a)$, and for any three frames
\begin{equation}
  \mathrm{dec}(z_{ac}) \approx \phi(c)-\phi(a)
  = \bigl(\phi(b)-\phi(a)\bigr) + \bigl(\phi(c)-\phi(b)\bigr),
  \label{eq:add-identity}
\end{equation}
whose right-hand side is $\mathrm{dec}(z_{ab}) + \mathrm{dec}(z_{bc})$. Reversibility follows the same way, from $\phi(a)-\phi(b) = -(\phi(b)-\phi(a))$.

The cancellation is algebraic: it uses neither $a<b<c$, nor that the frames share an episode, nor that anything happened between them. Any difference-of-features representation satisfies both relations, and reconstruction pushes the model into that family. The converse is classical: if additivity holds exactly on every triple through a fixed reference $o$, then $f(a,b)=g(b)-g(a)$ with $g(x)=f(o,x)$, the potential-difference family. The constraint selects it but cannot see what the potential carries: a difference of controllable state is an action, a difference of background state satisfies every identity equally well~\cite{whatmatters2026,whyfail2026}, and Eq.~\eqref{eq:errors} cannot tell them apart. The solution is trivial as evidence, not as a representation.

\begin{proposition}
\label{prop:injective}
Suppose reconstruction is accurate in the sense that $\lVert \mathrm{dec}(z_{xy}) - (\phi(y) - \phi(x)) \rVert \le \varepsilon$ for every pair used below. Then for any three frames and \emph{any} decoder,
\begin{gather*}
  \lVert \mathrm{dec}(z_{ac}) - \mathrm{dec}(z_{ab}) - \mathrm{dec}(z_{bc})
  \rVert \le 3\varepsilon,\\
  \lVert \mathrm{dec}(z_{ba}) + \mathrm{dec}(z_{ab}) \rVert \le 2\varepsilon,
\end{gather*}
and if $\mathrm{dec}$ is in addition linear with least singular value $\sigma_{\min} > 0$, the codes themselves obey $\lVert z_{ac} - (z_{ab} + z_{bc}) \rVert \le 3\varepsilon/\sigma_{\min}$ and $\lVert z_{ba} + z_{ab} \rVert \le 2\varepsilon/\sigma_{\min}$.
\end{proposition}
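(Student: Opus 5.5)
The plan is to write each decoded transition as its ideal value plus an error term and then apply the triangle inequality. For every ordered pair $(x,y)$ used, define the residual $r_{xy} = \mathrm{dec}(z_{xy}) - (\phi(y)-\phi(x))$. By hypothesis $\lVert r_{xy}\rVert \le \varepsilon$. No property of $\mathrm{dec}$ is needed at this stage: the residual is simply the defect of a vector relative to its target.

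For additivity, substitute $\mathrm{dec}(z_{xy}) = \phi(y)-\phi(x) + r_{xy}$ for the three pairs $(a,c)$, $(a,b)$ and $(b,c)$. The feature differences cancel exactly by the telescoping identity in Eq.~\eqref{eq:add-identity}:
\[
(\phi(c)-\phi(a)) - (\phi(b)-\phi(a)) - (\phi(c)-\phi(b)) = 0 .
\]
What remains is $r_{ac} - r_{ab} - r_{bc}$, whose norm is at most $3\varepsilon$ by the triangle inequality. Reversibility works the same way with the pairs $(b,a)$ and $(a,b)$. Here $(\phi(a)-\phi(b)) + (\phi(b)-\phi(a)) = 0$, leaving $r_{ba} + r_{ab}$, whose norm is at most $2\varepsilon$. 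None of these steps uses the ordering of $a$, $b$, $c$ or any relationship between the frames, and this is the substantive point the proposition is meant to make.

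For the code-level statements, assume $\mathrm{dec}$ is linear, say $\mathrm{dec}(z) = Dz$, and that $D$ has least singular value $\sigma_{\min} > 0$. Read the bias-free form this way; an affine decoder would leave a bias mismatch, and this is the one place where care is needed. Linearity gives
\[
\mathrm{dec}(z_{ac}) - \mathrm{dec}(z_{ab}) - \mathrm{dec}(z_{bc}) = D\bigl(z_{ac} - z_{ab} - z_{bc}\bigr).
\]
Since $\sigma_{\min}>0$ means $D$ is injective with $\lVert Dv\rVert \ge \sigma_{\min}\lVert v\rVert$ for every $v$, we get $\lVert z_{ac}-(z_{ab}+z_{bc})\rVert \le 3\varepsilon/\sigma_{\min}$. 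The same argument applied to $D(z_{ba}+z_{ab})$ gives $2\varepsilon/\sigma_{\min}$.

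The only step I expect to need attention is this last one. The inequality $\lVert Dv\rVert\ge\sigma_{\min}\lVert v\rVert$ requires $\sigma_{\min}$ to be the smallest of the $\min(\text{code dim},\text{feature dim})$ singular values, with the code dimension no larger than the feature dimension. Otherwise $D$ has a nontrivial kernel and no bound on the codes can hold. I would state this tall, full-column-rank reading explicitly.
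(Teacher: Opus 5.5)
Your proof is correct and matches the paper's argument. Both write each decoded transition as its feature difference plus a residual of norm at most $\varepsilon$, let the differences telescope, apply the triangle inequality, and then use $\lVert Dv\rVert \ge \sigma_{\min}\lVert v\rVert$ for a linear decoder of full column rank. Your two caveats, a bias-free linear map and the tall, full-column-rank reading of $\sigma_{\min}>0$, are exactly what the paper's phrase ``linear $\mathrm{dec}$ of full column rank'' presupposes.
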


\emph{Proof.} $\mathrm{dec}(z_{ac}) - \mathrm{dec}(z_{ab}) - \mathrm{dec}(z_{bc})$ equals the residual of $(a,c)$ minus those of $(a,b)$ and $(b,c)$, because the state differences telescope exactly as in Eq.~\eqref{eq:add-identity}; its norm is therefore at most $3\varepsilon$, with no assumption on $\mathrm{dec}$. For linear $\mathrm{dec}$ of full column rank, $\lVert \mathrm{dec}(v) \rVert \ge \sigma_{\min}\lVert v \rVert$, which yields the code-space bounds; reversibility is the same argument on $\phi(a)-\phi(b) = -(\phi(b)-\phi(a))$. \hfill$\square$

The bounds concern the numerators of Eq.~\eqref{eq:errors}; normalisation can amplify a small residual when the transition norm is small. We clamp denominators at $10^{-6}$ for numerical stability and analyse this dependence empirically in Sec.~\ref{sec:injectivity}, which finds the variation of this per-triple normalised error across triples dominated by the norm rather than by the residual. Nothing in the statement or proof consults the order of $a$, $b$, $c$: reconstruction accuracy alone drives both identities to within the residual, for arbitrary pairings alike. The code-space bound needs the linear decoder: a nonlinear decoder can reconstruct exactly while its codes compose arbitrarily, so for the source-conditioned and quantised forms the diagnosis is empirical (Sec.~\ref{sec:architectures}).

\subsection{Reversibility as a Functional of the Encoder}
\label{sec:rev-blind}

Additivity involves three frames and a composition, so it is at least possible for it to depend on which frame follows which. Reversibility involves no third frame.

\begin{proposition}
\label{prop:antisymmetry}
$\mathcal{E}_{\mathrm{rev}}$ is a functional of the relational encoder alone. If $f$ is antisymmetric, $f(x,y) = -f(y,x)$, then $\mathcal{E}_{\mathrm{rev}}(x,y) = 0$ for \emph{every} pair of inputs $(x,y)$, whether or not $y$ follows $x$, whether or not they come from the same episode, and whether or not either is a video frame.
\end{proposition}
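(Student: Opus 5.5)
The plan is to unfold the definition of $\mathcal{E}_{\mathrm{rev}}$ in Eq.~\eqref{eq:errors} and observe that it depends on the pair $(x,y)$ only through the two encoder outputs $z_{xy}=f(x,y)$ and $z_{yx}=f(y,x)$. For the first claim, I would write $\mathcal{E}_{\mathrm{rev}}(x,y) = \lVert f(y,x)+f(x,y)\rVert/\lVert f(x,y)\rVert$ (with the denominator clamped at $10^{-6}$ as stated after Proposition~\ref{prop:injective}). This exhibits $\mathcal{E}_{\mathrm{rev}}$ as a map $f \mapsto \mathcal{E}_{\mathrm{rev}}[f]$ on encoders, evaluated pointwise on input pairs. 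No third frame, no decoder, no state embedding $\phi$, and no data-generating distribution enters the expression. The median summary over pairs is also determined once $f$ and the set of evaluated pairs are fixed. This is in contrast to additivity, which at least involves a composition across three frames.

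For the second claim, I would substitute the antisymmetry hypothesis $f(y,x)=-f(x,y)$ directly into the numerator. This gives $\lVert f(y,x)+f(x,y)\rVert = \lVert -f(x,y)+f(x,y)\rVert = 0$. The hypothesis is quantified over all inputs in the encoder's domain, so the argument never uses any relation between $x$ and $y$: not temporal order, not episode membership, not even that they are video frames. Any two inputs the encoder accepts, for example noise images, satisfy the identity. The error is therefore exactly zero for every pair.

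The only step needing care is the denominator. If $f(x,y)=0$, the unclamped ratio is $0/0$. Under the clamp, the numerator is $0$ and the denominator is $\max(\lVert f(x,y)\rVert,10^{-6})>0$, so the ratio is still $0$. I would state this explicitly so the conclusion ``for every pair'' holds without exception. I would also remark that the same computation gives zero under ALAM's unnormalised expectation and AC-LAM's ratio of expectations, since the numerator vanishes pointwise.

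Finally, I would connect the result to Proposition~\ref{prop:injective}. Antisymmetry is exactly what the difference-of-features family $f(x,y)=g(y)-g(x)$ provides. Reconstruction pushes the model toward this family, and a reversibility error near zero can then be attained by encoder structure alone, independent of the data. I do not anticipate a genuine obstacle: the proof is a direct substitution, and the main thing to get right is the precise sense of ``functional of the encoder alone'' together with the degenerate zero-norm case.
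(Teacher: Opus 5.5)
Your proposal is correct and follows the paper's own argument. The paper likewise reads off from the definition that $\mathcal{E}_{\mathrm{rev}}$ is built only from $f(a,b)$ and $f(b,a)$ with no third frame, so antisymmetry zeroes the numerator for any pair, and it notes that $g(y)-g(x)$ is exactly the antisymmetric family that reconstruction favours. Your explicit handling of the clamped denominator in the degenerate case $f(x,y)=0$ is a small extra step the paper leaves implicit.
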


The proof is immediate from the definition: $\mathcal{E}_{\mathrm{rev}}$ in Eq.~\eqref{eq:errors} is built from $f(a,b)$ and $f(b,a)$ and takes no third frame, so no successor relation enters it. Antisymmetric $f$ are not exotic; $f(x,y) = g(y) - g(x)$ is one, and it is exactly the family that reconstruction drives the model towards.

This has two consequences. First, exact antisymmetry gives zero error on every input, so a low reversibility error cannot on its own certify true succession. Second, because $\mathcal{E}_{\mathrm{rev}}$ reads only the pair $(a,b)$, any intervention that leaves that pair intact cannot move it for a fixed model. So where reversibility differs in Table~\ref{tab:shuffle}, whose destroyed columns are evaluated on the same pre-state pairs as the real ones, two \emph{models} differ, not two readings of one.

\subsection{The Baseline Reference}

Since Eq.~\eqref{eq:add-identity} already holds for a reconstructing model carrying no algebraic loss, that model is the reference against which a constrained model's error must be read; an untrained network supplies an absolute anchor for how large the error can be. ALAM's own ratios are indeed taken against a trained unconstrained model~\cite{alam2026}, and the decomposition below shows why such ratios take the magnitude they do. Writing $\mathcal{E}_{\mathrm{rand}}$, $\mathcal{E}_{\mathrm{rec}}$ and $\mathcal{E}_{\mathrm{alg}}$ for the three errors,
\begin{equation}
  \rho_{\mathrm{raw}} =
  \frac{\mathcal{E}_{\mathrm{rand}}}{\mathcal{E}_{\mathrm{alg}}},
  \qquad
  \rho_{\mathrm{corr}} =
  \frac{\mathcal{E}_{\mathrm{rec}}}{\mathcal{E}_{\mathrm{alg}}},
  \label{eq:corrected}
\end{equation}
and the share of the reduction the constraint is responsible for is $(\mathcal{E}_{\mathrm{rec}}-\mathcal{E}_{\mathrm{alg}})/ (\mathcal{E}_{\mathrm{rand}}-\mathcal{E}_{\mathrm{alg}})$.

\section{Experiments}

\subsection{Experimental Setup}
\label{sec:setup}

Transitions are learned from five source domains chosen to differ in viewpoint, segment length and subject matter: instructional kitchen video~\cite{captaincook2024}, egocentric assistance~\cite{holoassist2023}, multi-view assembly~\cite{assembly1012022}, egocentric industrial assembly~\cite{impact2026}, and laparoscopic surgery~\cite{cholect2022}, on frozen V-JEPA~2.1 features~\cite{vjepa2025}. Recordings are split three ways, and hyperparameters are selected per architecture on validation reconstruction while every reported error is computed on the untouched test split unless a training-set quantity is named. Arms cross the presence of the constraint with the integrity of the temporal pairing, three seeds each; destroyed-pairing arms are additionally repeated over three independent pairing realisations.

The anchor $\mathcal{E}_{\mathrm{rand}}$ is the same error on an untrained transition model, averaged over three initialisations. Errors are computed as per-triple normalised ratios (Eq.~\eqref{eq:errors}, $10^{-6}$ in the denominator), summarised by their median over triples and averaged over seeds. A variance hinge on the state embedding guards the normalisation. Transition models are two-layer maps (AdamW~\cite{adamw2019}, $3{,}000$ steps, batch $256$).

Policy experiments use LIBERO~\cite{libero2023}, re-instantiating the same objectives in a self-contained pipeline: a small convolutional state encoder is trained from scratch on the benchmark's own demonstration frames (six-channel stacks of the two camera views), under the identical reconstruction, constraint, violation-contrastive and anchoring losses, with no frozen video features. Each arm's encoder then initialises the visual encoder of the benchmark's behaviour-cloning transformer, otherwise unmodified: one three-channel ResNet~\cite{resnet2016} per camera from the same checkpoint, all shape-compatible backbone weights transferred ($29$ of $32$ tensors) while the dimension-mismatched input convolution and projection head are reinitialised, and the policy fine-tuned per task under the unmodified protocol.

\subsection{Verifying the Analysis}
\label{sec:injectivity}

We test the premises of Proposition~\ref{prop:injective} on the models of Table~\ref{tab:corrected}, over the $4{,}334$ test triples of the five domains, both additive arms and three training seeds. The per-pair residual $\varepsilon$ is $1$--$10\%$ of the transition norm with the constraint and $3$--$24\%$ without, and the unseen two-step pair $(a,c)$ has no larger a residual than the one-step pairs in any trained run. Every per-triple bound holds, the measured numerator at $0.15$--$0.39$ of it: reconstruction accuracy alone caps the unconstrained model's decoded-space error at a median $0.02$--$0.09$ against an untrained anchor near $1$. The decoder has full rank $64$ but condition number $6$ to $2.2\times10^{4}$, so the code-space constant is weak where conditioning is poor; the errors on $z$ and $\mathrm{dec}(z)$ differ by $27\%$ at the median and $84\%$ at worst, and the ordering of the constrained and unconstrained models is unchanged in every domain and seed. What matters is not that the bound holds but that it is small because $\varepsilon$ is: reconstruction accuracy, not any temporal property, caps the error.

Proposition~\ref{prop:injective} bounds the residual, not the ratio the line reports, and Fig.~\ref{fig:geometry} separates the two on every test triple. Feature chords are far from collinear (median bend angle $92$--$119^{\circ}$), yet between the straightest and the most bent quintile of triples the constrained residual $\lVert z_{ac}-z_{ab}-z_{bc}\rVert$ changes by at most $1.3\times$ while the normalising norm $\lVert z_{ac}\rVert$ falls to $0.43$--$0.99$ of its value, so the reported error rises by $1.2$--$2.8\times$; reversibility behaves the same way in the displacement (residual within $1.5\times$, normaliser growing $3.2$--$11\times$). The same dependence holds for the model retrained on destroyed pairings, whose curve sits above the real-pair model by the offset of Table~\ref{tab:shuffle}, and within those pairings the additivity error shows no detectable monotonic relation to how far the swapped successor lies from its origin (rank correlation within $\pm0.10$ in every domain): what varies across triples is the geometry of the chords, to which Eq.~\eqref{eq:add-identity} is indifferent.

We directly demonstrate that the errors cannot identify the information encoded by a feature difference. In a simulated world with a controllable coordinate $q$ and an independent nuisance coordinate $n$, we construct three exact difference representations: of $q$, of $n$, and of both, their concatenation rescaled by $1/\sqrt{2}$. Over three seeds all three score at machine precision on both true and destroyed pairings ($\mathcal{E}_{\mathrm{add}}\approx10^{-16}$, $\mathcal{E}_{\mathrm{rev}}=0$), while action decodability is $R^2=1.00$ for the controllable difference, $0.00$ for the nuisance one, and $1.00$ for their even mixture: any procedure certifying by these errors certifies all three equally. Compensated reparameterisations $z\to Az$ change both errors by under $2\%$ on both arms, so this non-identifiability is not specific to one latent coordinate system.

\begin{figure}[t]
\centering
\includegraphics[width=\linewidth]{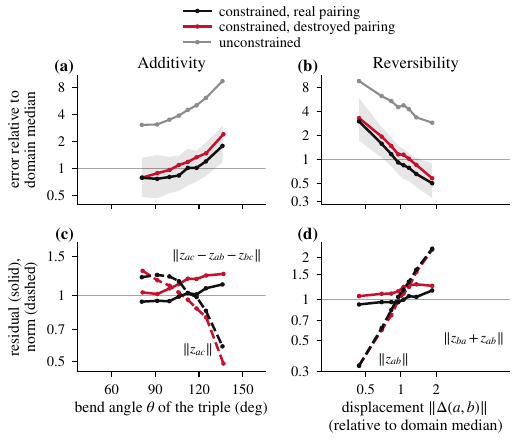}
\vskip -1em
\caption{\textbf{Variation of the per-triple normalised error is dominated by its normaliser.} Test triples of five domains binned by bend angle (left) or displacement $\lVert\Delta(a,b)\rVert$ (right), where $\Delta(a,b)=\phi(b)-\phi(a)$. (a,b)~Binned median errors, shaded by the interquartile range of the constrained real-pair model. (c,d)~Their residual numerators (solid) and denominator norms (dashed). Vertical quantities are relative to the constrained-real domain median and the displacement to the domain median displacement.}
\label{fig:geometry}
\end{figure}

\subsection{Baseline-Corrected Reduction}
\label{sec:corrected}

Table~\ref{tab:corrected} compares the two fold reductions. Relative to the untrained anchor, \emph{any} reconstructing model accounts for most of the error reduction, without an algebraic loss. A released model shows the same insensitivity to the pairing: on LAPA's released model~\cite{lapa2025}, quantised and source-conditioned, the errors on robot triples are $1.03$ and $1.91$, and destroying the temporal pairing leaves them at $1.02$ and $1.96$. The same holds under the metric definitions of ALAM and AC-LAM: recomputed on the same test triples, ALAM's unnormalised expectation gives folds of $1.0$--$4.2\times$ and AC-LAM's ratio of expectations $8.6$--$32\times$, against $2.6$--$5.6\times$ for Eq.~\eqref{eq:errors}; the constrained model retrained on destroyed pairings rises by $0$--$23\%$ and $7$--$75\%$ respectively, and stays below the unconstrained real-pair model in every domain under the ratio of expectations and in four of five under the unnormalised expectation, the exception being Assembly101 ($5.28$ against $5.14$).

\begin{table}[t]
\centering
\caption{Fold reduction against the untrained anchor ($\rho_{\mathrm{raw}}$) and the trained counterpart ($\rho_{\mathrm{corr}}$), and the constraint's share of the total reduction.}
\label{tab:corrected}
\vskip -0.5em
\resizebox{0.9\columnwidth}{!}{%
\begin{tabular}{llrrrrr}
\toprule
Domain & Error & $\mathcal{E}_{\mathrm{rec}}$ & $\mathcal{E}_{\mathrm{alg}}$ & $\rho_{\mathrm{raw}}$ & $\rho_{\mathrm{corr}}$ & Share \\
\midrule
CaptainCook4D~\cite{captaincook2024} & Add. & 0.084 & 0.015 & 69.3$\times$ & 5.6$\times$ & 6.7\% \\
CaptainCook4D & Rev. & 0.100 & 0.013 & 129.8$\times$ & 7.9$\times$ & 5.3\% \\
HoloAssist~\cite{holoassist2023} & Add. & 0.054 & 0.012 & 85.8$\times$ & 4.6$\times$ & 4.3\% \\
HoloAssist & Rev. & 0.088 & 0.022 & 87.9$\times$ & 4.0$\times$ & 3.5\% \\
Assembly101~\cite{assembly1012022} & Add. & 0.100 & 0.038 & 26.5$\times$ & 2.6$\times$ & 6.5\% \\
Assembly101 & Rev. & 0.123 & 0.047 & 40.3$\times$ & 2.6$\times$ & 4.0\% \\
IMPACT~\cite{impact2026} & Add. & 0.042 & 0.008 & 131.2$\times$ & 5.5$\times$ & 3.5\% \\
IMPACT & Rev. & 0.053 & 0.010 & 174.6$\times$ & 5.4$\times$ & 2.5\% \\
CholecT50~\cite{cholect2022} & Add. & 0.110 & 0.028 & 35.5$\times$ & 3.9$\times$ & 8.3\% \\
CholecT50 & Rev. & 0.419 & 0.097 & 20.4$\times$ & 4.3$\times$ & 17.1\% \\
\bottomrule
\end{tabular}}
\vspace{-0.5em}
\end{table}

\subsection{Sensitivity to Temporal Pairing}
\label{sec:shuffle}

Within each recording we replace every post-state with a different post-state from the same recording, chosen to approximately match the original feature-displacement magnitude, while holding the pre-state and the split fixed; the true successor is never retained. An additivity triple $(a,b,c)$ takes $a$ and $b$ from the pre-states of consecutive segments and $c$ from a post-state, so the swap changes $c$ and leaves the pair $(a,b)$ on which reversibility is evaluated intact. Every arm is retrained from scratch and evaluated on the destroyed pairings.

Across the five domains, $72$--$99\%$ of the replaced successors carry a different action label from the original, and the replacement lies a median of $1$--$7$ segments away. Yet the feature difference the errors are computed on carries almost no cue of the destruction: a linear classifier trained to separate real from destroyed pairs reaches balanced accuracy $0.49$--$0.50$, chance level, and a one-hidden-layer multilayer perceptron (MLP) recovers only a weak nonlinear cue ($0.52$--$0.61$). The control thus removes the annotated succession while leaving the feature difference nearly uninformative about it.

Table~\ref{tab:shuffle} compares the four combinations of constraint and pairing conditions. A response exists, but its magnitude is not calibrated to temporal validity. Under the constraint additivity rises in all five domains, by $+9\%$ to $+37\%$ and in the same direction in every seed and every pairing realisation, yet the constrained model retrained on destroyed pairings still reaches $0.010$--$0.041$, below the unconstrained model on that domain's \emph{real} pairings in every case. Outside that one consistent cell the response has no consistent direction: the unconstrained counterpart's additivity moves by $-4\%$ to $+38\%$, reversibility by $-26\%$ to $+33\%$ with mixed sign, and fifteen of the twenty cells rise while five fall. The inconsistency is present with and without the algebraic losses, so it is not produced by them.

The control separates two axes that a single comparison would conflate. For a fixed model, which triples are evaluated is not the question: reversibility is evaluated on the pair $(a,b)$, which the swap leaves intact, so the evaluation pairing cannot move it for a fixed model, and additivity is bounded by the reconstruction residual on whatever triples are scored (Proposition~\ref{prop:injective}). Scoring every trained model on both pairings makes this distinction explicit. The constrained model trained on real pairings scores $0.008$--$0.038$ on real test triples and $0.009$--$0.040$ on destroyed ones; the destroyed-trained model scores $0.009$--$0.041$ and $0.010$--$0.041$, respectively. At a fixed evaluation pairing, changing the training pairing moves the constrained additivity error by $+2\%$ to $+35\%$; at a fixed training pairing, changing the evaluation pairing moves it by at most $12\%$, or $0.003$ in absolute terms, and leaves reversibility exactly unchanged. Every cell of the cross nevertheless remains below the unconstrained model on real pairings. Retraining therefore isolates the other axis, what the model learned from real as against destroyed pairings; the destroyed-trained model's excess persists across the whole range of chord geometry (Fig.~\ref{fig:geometry}), and it is on that axis that the certified level is reached without the pairing.

The triples above keep one real edge, $a\to b$ between consecutive pre-states. A second control builds the triple from the model's own transition pairs, $(a,b,c)=(\mathrm{pre}_i,\mathrm{post}_i,\mathrm{post}_{i+1})$, for loss and evaluation alike: on destroyed pairings $a\to b$ is the swapped pair, and a real neighbour survives on $b\to c$ and $a\to c$ only where magnitude matching selects an adjacent segment, in $5$--$17\%$ and $4$--$38\%$ of triples (the upper figures on CholecT50, whose swaps lie a median of one segment away). On these triples the constrained model retrained on destroyed pairings rises by $+5\%$ to $+63\%$ and remains below the unconstrained real-pair model of the same protocol in every domain, at $0.010$--$0.045$ against $0.046$--$0.138$.

\begin{table}[t]
\centering
\caption{Algebraic error with and without the constraint on real and destroyed pairings; the constrained additivity rise holds in every seed and realisation.}
\label{tab:shuffle}
\vskip -0.5em
\resizebox{0.9\columnwidth}{!}{%
\begin{tabular}{lrrrrrrrr}
\toprule
& \multicolumn{4}{c}{Additivity} & \multicolumn{4}{c}{Reversibility} \\
\cmidrule(lr){2-5}\cmidrule(lr){6-9}
& \multicolumn{2}{c}{Recon.} & \multicolumn{2}{c}{+Alg.}
& \multicolumn{2}{c}{Recon.} & \multicolumn{2}{c}{+Alg.} \\
Domain & real & destroyed & real & destroyed & real & destroyed & real & destroyed \\
\midrule
CaptainCook4D~\cite{captaincook2024} & 0.084 & 0.084 & 0.015 & 0.020 & 0.100 & 0.093 & 0.013 & 0.019 \\
HoloAssist~\cite{holoassist2023} & 0.054 & 0.074 & 0.012 & 0.015 & 0.088 & 0.117 & 0.022 & 0.027 \\
Assembly101~\cite{assembly1012022} & 0.100 & 0.109 & 0.038 & 0.041 & 0.123 & 0.148 & 0.047 & 0.050 \\
IMPACT~\cite{impact2026} & 0.042 & 0.053 & 0.008 & 0.010 & 0.053 & 0.067 & 0.010 & 0.012 \\
CholecT50~\cite{cholect2022} & 0.110 & 0.105 & 0.028 & 0.032 & 0.419 & 0.312 & 0.097 & 0.096 \\
\bottomrule
\end{tabular}}
\vspace{-0.5em}
\end{table}

\subsection{Decoder Architectures}
\label{sec:architectures}

The derivation covers additive decoders but not two alternatives in use: a decoder conditioned on the source observation without adding the transition to it, and one that quantises the transition before decoding. Table~\ref{tab:arch} reports all three; its additive rows are the domain means of Table~\ref{tab:shuffle}. The quantised arm is evaluated on its hard discrete code, the quantity such a method constrains and deploys; on destroyed pairings the median limits the influence of the few two-step pairs with nearly coincident endpoints, which drive a triple-mean to diverge.

Under the constraint all three decoders reach a low additivity error, $0.020$, $0.023$, $0.134$, and destroying the pairing leaves them low, at $0.024$, $0.025$, $0.177$; constrained reversibility stays below $0.05$ throughout. The architecture determines the reported reduction factor: the same constraint on the same data yields $3.9\times$ with an additive decoder and $32.9\times$ with a source-conditioned one, because the latter's unconstrained error is $0.768$ rather than $0.078$. The fold reflects the degree to which the unconstrained architecture violates an identity it was not trained to satisfy, and is not comparable across families. Under ALAM's unnormalised expectation and AC-LAM's ratio of expectations the folds become $1.8\times$ and $15\times$ (additive), $20\times$ and $1{,}152\times$ (source-conditioned) and $8.8\times$ and $11\times$ (quantised), and, averaged over domains, under every definition and decoder the constrained model retrained on destroyed pairings remains below the unconstrained real-pair model. That ratio factorises per domain into the fall of the squared residual and the change in code energy $\mathbb{E}\lVert z_{ab}\rVert^2$: for the source-conditioned decoder the residual falls $21$--$3{,}859\times$ while the energy changes $0.8$--$25\times$, for the additive decoder $1$--$15\times$ against $2$--$8\times$, so the fold reflects joint changes in residual and code-energy statistics rather than a direct measure of temporal structure.

\begin{table}[t]
\centering
\caption{Algebraic error under three decoder forms, mean over domains, seeds and realisations of the per-run median; the quantised arm is evaluated on its hard code. Fold $=$ unconstrained\,/\,constrained on real pairings.}
\label{tab:arch}
\vskip -0.5em
\resizebox{0.9\columnwidth}{!}{%
\begin{tabular}{lrrrrr}
\toprule
& \multicolumn{2}{c}{Unconstrained} & \multicolumn{2}{c}{Constrained} & \\
\cmidrule(lr){2-3}\cmidrule(lr){4-5}
Decoder & real & destroyed & real & destroyed & fold \\
\midrule
\multicolumn{6}{l}{\emph{Additivity}} \\
Additive & 0.078 & 0.085 & 0.020 & 0.024 & $3.9\times$ \\
Source-conditioned~\cite{alam2026} & 0.768 & 0.764 & 0.023 & 0.025 & $32.9\times$ \\
Quantised~\cite{lapa2025} & 0.924 & 0.899 & 0.134 & 0.177 & $6.9\times$ \\
\midrule
\multicolumn{6}{l}{\emph{Reversibility}} \\
Additive & 0.156 & 0.147 & 0.038 & 0.041 & $4.1\times$ \\
Source-conditioned & 1.536 & 1.497 & 0.039 & 0.036 & $39.6\times$ \\
Quantised & 0.272 & 0.258 & 0.014 & 0.016 & $19.2\times$ \\
\bottomrule
\end{tabular}}
\vspace{-0.5em}
\end{table}

\subsection{Violation-Contrastive Repair}
\label{sec:method}

The most direct repair is to demand a difference: the difference-of-features solution passes the collapse guards of Sec.~II but scores identically on real and destroyed pairings, so an objective that requires the algebra to fail on destroyed pairings cannot be satisfied by it. Let $\mathcal{E}(\cdot)$ denote the errors of Eq.~\eqref{eq:errors} on a batch of triples, $\mathcal{T}$ triples in their true order and $\tilde{\mathcal{T}}$ triples re-drawn so that no original annotated successor edge is retained:
\begin{equation}
  \mathcal{L} = \mathcal{L}_{\mathrm{rec}}
  + \lambda \Bigl[ \mathcal{E}(\mathcal{T})
  + \max\bigl(0,\; m - \mathcal{E}(\tilde{\mathcal{T}})\bigr) \Bigr].
  \label{eq:objective}
\end{equation}
The first bracketed term is the usual constraint; the second requires the destroyed pairings to violate it by a margin $m$. The negatives are drawn from a pairing realisation independent of the one used for evaluation, and we evaluate all twelve $(\lambda,m)$ configurations.

Separation from these negatives does not by itself isolate temporal order: the negatives differ from true successor pairs in the relation under test but also in how the pairs were sampled, so a separation alone does not apportion credit between succession and construction. Matching negatives for episode, endpoints and step size removes the nuisance differences we can name, and a reconstruction-only baseline measures the separation that appears with no order-sensitive term.

Separation is the difference between the median additivity error on destroyed and on real triples. Across twelve $(\lambda,m)$ configurations ($\lambda\in\{0.03,0.1,0.3,1.0\}$, $m\in\{0.1,0.3,1.0\}$) and all five domains, in the $27$ of $60$ domain-configuration pairs whose reconstruction stays within a factor of two of the unconstrained model, the separation is at most $0.04$ on training triples, including a pairing realisation the model never trained against, and at most $0.03$ on test triples, against a reconstruction-only baseline of up to $0.01$; the constraint alone separates by $0.02$ at most. Configurations that separate by more than $0.04$ reach $0.08$ at best and pay for it with $3$--$17\times$ the unconstrained reconstruction error. Negatives matched for recording, $(a,b)$ pair and step magnitude are separated by $0.02$ at training time and by less than $0.003$ on test. The gap is consistent with the pairing-insensitive difference-of-features solution of Sec.~\ref{sec:analysis}.

The hinge term averages the normalised error over a batch, and a triple-mean of the same models reports destroyed-triple errors near $20$, driven by the few destroyed two-step pairs whose normalising norm approaches zero (the divergence of Sec.~\ref{sec:architectures}), while the median gap stays small; a separation carried by a few near-zero normalisers could not certify order in any case. Semantic anchoring, a small amount of action supervision reaching the encoder~\cite{laom2025}, selects among these algebraically indistinguishable solutions by what they predict; Sec.~\ref{sec:policy} evaluates it.

\subsection{Downstream Policy Evaluation}
\label{sec:policy}

The benchmark policy with random visual initialisation reaches $0.870$ over the ten tasks and serves as the reference. Each pretrained arm is evaluated with the same policy architecture, optimisation and rollout protocol, transferring all shape-compatible visual parameters, and the arms are compared with one another as a dissociation test. Every pretrained initialisation performs below the reference, and on LIBERO-GOAL destroying the pairing moves either arm by under one percentage point, so no pairing-dependent difference is resolved at the present seed budget.

The constraint's contribution is more limited than the certificate implies. It adds $4.0 \pm 4.6$ points on real data and $2.8$ on data whose successor pairing was destroyed, each positive in ten of the thirteen seeds. Whether the gain \emph{depends} on the pairing is a weaker question: the difference between the two gains averages $+1.1$ points against a seed spread of seven (positive in five of thirteen), and most of the benefit persists without the structure it is credited to. The repair adds $+0.4 \pm 3.6$ points over the constraint (positive in eight of thirteen seeds), no resolved downstream improvement at the present seed budget.

For the anchoring comparison, a linear head predicting the demonstrated action sequence over the pretraining gap reaches $0.821 \pm 0.030$, a paired gain of $+3.8 \pm 4.1$ points over the unconstrained arm against the constraint's $+4.0 \pm 4.6$, and adding the constraint yields $-0.2 \pm 3.6$ points (positive in six of thirteen seeds).

\begin{table}[t]
\centering
\caption{Policy success, mean $\pm$ SD over $13$ seeds of the per-seed task average ($50$ episodes per task), by pretraining objective and pretraining-data pairing. Top: LIBERO-GOAL against the random-init reference; bottom: LIBERO-SPATIAL.}
\label{tab:policy}
\footnotesize
\begin{tabular}{llc}
\toprule
Pretraining objective & Pairing & Success $\uparrow$ \\
\midrule
\multicolumn{3}{l}{\emph{LIBERO-GOAL}} \\
Benchmark default (random init) & -- & $0.870$ \\
Unconstrained & Real & $0.783 \pm 0.038$ \\
Unconstrained & Destroyed & $0.792 \pm 0.031$ \\
Algebraic constraint & Real & $0.822 \pm 0.027$ \\
Algebraic constraint & Destroyed & $0.820 \pm 0.029$ \\
Violation-contrastive & Real & $0.827 \pm 0.024$ \\
Label anchoring~\cite{laom2025} & Real & $0.821 \pm 0.030$ \\
Constraint $+$ anchoring & Real & $0.819 \pm 0.017$ \\
\midrule
\multicolumn{3}{l}{\emph{LIBERO-SPATIAL}} \\
Unconstrained & Real & $0.739 \pm 0.047$ \\
Unconstrained & Destroyed & $0.744 \pm 0.043$ \\
Algebraic constraint & Real & $0.718 \pm 0.034$ \\
Algebraic constraint & Destroyed & $0.773 \pm 0.036$ \\
\bottomrule
\end{tabular}
\vspace{-0.5em}
\end{table}

The second benchmark sharpens this result (Table~\ref{tab:policy}, lower). On LIBERO-SPATIAL, destroying the pairing changes the unconstrained arm by only $+0.006$, but raises the constrained arm by $+0.054 \pm 0.043$, positive in all thirteen matched seeds. The constraint's four-point advantage on real LIBERO-GOAL data does not reappear either: on real SPATIAL data its effect is $-0.020$. These results do not establish that destroyed pairing is generally beneficial; they show that the observed policy differences cannot be attributed to preserving the successor relation.

The pipeline allows us to compare the algebraic certificate and representation content on the same arms. On demonstrations no arm trained on, the additivity error is $0.92$ without the constraint and $0.07$ with it; anchoring alone gives $0.99$, and combining anchoring with the constraint gives $0.007$. Action content, by a ridge probe from the transition code to the demonstrated action sequence over the pretraining gap, trained on pairs from the pretraining demonstrations and scored by pooled multivariate $R^2$ on pairs from $25$ further demonstrations per task that no arm used for any loss, identical pairs for every arm (five seeds): from the transition code, the demonstrated action sequence is decodable at $R^2=0.49\pm0.02$ (unconstrained), $0.41\pm0.03$ (constrained), $0.50\pm0.02$ (anchoring), $0.22\pm0.04$ (both); from the transferred state embeddings the arms are level ($0.62$--$0.63$, each SD $\le0.03$). The certificate runs against content: across these four arms the mean decodability falls as the certificate improves. Paired over the five matched seeds, the constraint lowers code decodability by $0.07 \pm 0.04$ without anchoring and by $0.27 \pm 0.05$ with it, negative in every seed, while linear action decodability from the transferred state embeddings changes by $0.00 \pm 0.01$. The measured effect is therefore larger at the code level than at the state level; this dissociation alone does not explain the policy differences. Success is monotonic in neither certificate nor decodability; the comparison does not identify what produces it.

Table~\ref{tab:dissociation} reports the full results. The best- and worst-certified arms, constraint$+$anchor ($\mathcal{E}_{\mathrm{add}}=0.007$) and anchoring ($0.99$), differ by $0.2$ points downstream, within the seed spread, while their code decodability runs the opposite way ($0.22$ against $0.50$): selecting encoders by the algebraic error would pick the representation with the least linearly decodable action under this probe.

\begin{table}[t]
\centering
\caption{Certificate (five seeds) against linearly decodable action content ($R^2$, ridge probe), both on demonstrations unused by any pretraining loss, from the transition code and the transferred states, with $13$-seed policy success; certificate and probe extrema in bold.}
\label{tab:dissociation}
\vskip -0.5em
\resizebox{0.9\columnwidth}{!}{%
\begin{tabular}{lrrrrr}
\toprule
& \multicolumn{2}{c}{Certificate} & \multicolumn{2}{c}{Action $R^2$} & \\
\cmidrule(lr){2-3}\cmidrule(lr){4-5}
Arm & $\mathcal{E}_{\mathrm{add}}\downarrow$ & $\mathcal{E}_{\mathrm{rev}}\downarrow$ & code$\uparrow$ & states$\uparrow$ & Success$\uparrow$ \\
\midrule
Unconstrained & $0.923$ & $1.924$ & $0.486$ & $0.624$ & $0.783$ \\
Constraint & $0.074$ & $0.223$ & $0.412$ & $0.624$ & $0.822$ \\
Anchoring~\cite{laom2025} & $0.985$ & $1.980$ & $\mathbf{0.496}$ & $\mathbf{0.634}$ & $0.821$ \\
Constraint\,$+$\,anchor & $\mathbf{0.007}$ & $\mathbf{0.025}$ & $0.224$ & $\mathbf{0.634}$ & $0.819$ \\
\bottomrule
\end{tabular}}
\end{table}

\subsection{Robustness}
\label{sec:seedbudget}

Recomputing each difference over every choice of three of our thirteen seeds, a three-seed experiment could have reported the four-point effect anywhere from $-2.6$ to $+10.1$ points, the wrong sign to $2.5\times$ the full-budget estimate, the seed sensitivity documented in deep reinforcement learning~\cite{henderson2018}.

The domain results above all use frozen V-JEPA~2.1 features; we re-run the constrained and unconstrained models on four of the five domains with three further encoders: VideoMAEv2~\cite{videomae2023} fine-tuned on Kinetics-710 (appearance-dominated) and on SSv2~\cite{ssv22017} (temporally supervised), and the single-frame DINOv2~\cite{dinov22024}. Per encoder and domain the corrected fold stays in single digits ($2.9$--$5.9\times$), and retraining on destroyed pairings moves the constrained error by $-2\%$ to $+31\%$, upward in ten of the twelve encoder-domain pairs, while it remains below the unconstrained real-pair error in every one. Neither explicit temporal supervision nor a single-frame backbone changes the picture: the metric requires no temporally computed features and cannot certify them.

\section{Conclusion}

Additivity and reversibility errors are widely read as evidence of compositional temporal structure; a low error can arise from reconstruction alone, without identifying temporal succession or action-relevant content. The decoded transitions of a reconstructing additive decoder satisfy both identities to within the reconstruction residual whatever the frames' pairing, a zero reversibility error certifies nothing about succession, the per-triple normalised error's variation across triples is dominated by its normaliser, the reported order-of-magnitude reductions depend on the decoder family and on the metric definition as much as on what is learned, and the certified level is reached after the temporal pairing is destroyed, on four encoders, while a released model's errors are as insensitive to the pairing as ours. Downstream, success is not monotonic in the certificate, the code's mean linear action decodability falls as the certificate improves across the tested arms, and a small amount of action supervision yields a comparable mean downstream gain in the tested pipeline. These claims concern what the algebraic error certifies about a representation, not the policy performance of any system. What the line needs is a way to tell a certificate that tracks temporal structure from one that does not: report $\rho_{\mathrm{corr}}$ (Eq.~\ref{eq:corrected}) against a trained, architecture-matched counterpart rather than $\rho_{\mathrm{raw}}$ against an untrained one; retrain under destroyed pairings and report whether the constrained model still improves over the same real-pair reference, since an improvement that persists cannot certify preserved succession; and report per-seed sign counts at a budget that resolves the effect. These checks test the reading that a low error certifies succession, not every diagnostic of pairing: failing them rules out that reading, and passing them motivates downstream validation rather than replacing it.

\section*{Acknowledgment}
The project is funded by the Deutsche Forschungsgemeinschaft (DFG, German Research Foundation) -- SFB-1574 -- 471687386. This work was supported in part by the SmartAge project sponsored by the Carl Zeiss Stiftung (P2019-01-003; 2021-2026). The authors gratefully acknowledge the computing time provided on the high-performance computer HoreKa by the National High-Performance Computing Center at KIT (NHR@KIT). This center is jointly supported by the Federal Ministry of Education and Research and the Ministry of Science, Research and the Arts of Baden-W\"urttemberg, as part of the National High-Performance Computing (NHR) joint funding program (\url{https://www.nhr-verein.de/en/our-partners}). HoreKa is partly funded by the German Research Foundation (DFG).

\bibliographystyle{IEEEtran}
\bibliography{reference}
\end{document}